\documentclass[letterpaper, 10 pt, conference]{ieeeconf}  

\IEEEoverridecommandlockouts                              

\newcommand{\A}{\mathcal{A}}
\newcommand{\C}{\mathcal{C}}
\newcommand{\D}{\mathcal{D}}
\newcommand{\E}{\mathbb{E}}
\newcommand{\calS}{\mathcal{S}}
\newcommand{\M}{\mathcal{M}}
\newcommand{\tr}{\operatorname{Tr}}
\newcommand{\diag}{\operatorname{diag}}
\newcommand{\norm}[1]{\lVert #1\rVert}
\newcommand{\R}{\mathbb{R}}

\newcommand{\train}{\text{train}}
\newcommand{\test}{\text{test}}
\newcommand{\ptr}{P_{\train}} 
\newcommand{\pte}{P_{\test}}
\newcommand{\snr}{\mathrm{SNR}}
\newcommand\numberthis{\addtocounter{equation}{1}\tag{\theequation}}

\newtheorem{remark}{Remark} 
\newtheorem{theorem}{Theorem}
\newtheorem{corollary}[theorem]{Corollary} 
\newtheorem{assumption}{Assumption}[section]
\newtheorem{lemma}[theorem]{Lemma}
\newtheorem{innercustomthm}{Theorem}
\newenvironment{mythm}[1]
    {\renewcommand\theinnercustomthm{#1}\innercustomthm}
    {\endinnercustomthm}

\usepackage{algorithm, algpseudocode}
\usepackage{amsmath, amssymb}
\usepackage{booktabs}
\usepackage{float}
\usepackage{hyperref}
\usepackage{mathtools, multirow}
\usepackage{xcolor}

\title{\LARGE \bf
Data Deletion Can Help in Adaptive RL
}

\author{Param Budhraja\textsuperscript{1}, Aditya Gangrade\textsuperscript{2}, Alex Olshevsky\textsuperscript{$\dagger$,3}, Venkatesh Saligrama\textsuperscript{$\dagger$,4}
\thanks{ Under review for IEEE CDC 2026. 
\textsuperscript{$\dagger$}Equal advising}
\thanks{\textsuperscript{1}Param Budhraja is with Department of Electrical and Computer Engineering,
        Boston University, Boston, MA 02215, USA {\tt \small paramb@bu.edu}}
\thanks{\textsuperscript{2}Aditya Gangrade is with Department of Electrical and Computer Engineering,
        Boston University, Boston, MA 02215, USA {\tt \small gangrade@bu.edu}}
\thanks{\textsuperscript{3}Alex Olshevsky is Faculty of Electrical and Computer Engineering, Systems Engineering, 
        Boston University, Boston, MA 02215, USA {\tt \small alexols@bu.edu}}
\thanks{\textsuperscript{4}Venkatesh Saligrama is Faculty of Electrical and Computer Engineering, Systems Engineering, 
        Boston University, Boston, MA 02215, USA {\tt \small srv@bu.edu}}
}

\begin{document}

\maketitle
\thispagestyle{empty}
\pagestyle{empty}

\begin{abstract}

Deploying reinforcement learning policies in the real world requires adapting to time-varying environments. We study this problem in the contextual Markov Decision Process (cMDP) framework, where a family of environments is indexed by a low-dimensional context unknown at test time. The standard approach decomposes the problem: train a so-called ``universal policy" which assumes knowledge of the true context, then pair it with a context estimator which approximates context using the observed trajectory. We identify a simple, counterintuitive trick that substantially improves the estimator: randomly delete a fraction of the training buffer after each round. This works because data is collected across multiple rounds using progressively better policies, and older trajectories come from a different distribution than what the estimator will face at deployment time; random deletion creates an implicit exponential decay on older data while preserving diversity without requiring any explicit identification of which samples are stale. On classical control and Brax locomotion benchmarks, this reduces robustness gap by 30\% for MLPs and by 6\% on average for recurrent networks. Strikingly, it allows a narrow MLP with 5$\times$ fewer parameters to outperform a wide MLP trained without deletion. To understand when and why deletion helps, we analyze regularized empirical risk minimization with a mismatch between the train distribution and the distribution at deployment; in this idealized setting, we prove that removing a single uniformly random training point decreases expected test loss in expectation under mild conditions. 
For ridge regression we make this quantitative: deletion helps when the regularization coefficient is moderate and the signal-to-noise ratio ($\snr$) is sufficiently low, and, crucially, this $\snr$ threshold gives a direct measure of how large the distribution mismatch between training and deployment must be for deletion to be beneficial.

\end{abstract}

\section{INTRODUCTION}

Reinforcement learning (RL) has been successful in a large diversity of domains, for example, in Atari games \cite{mnih2013playing} 
they have achieved superhuman performance, in the domain of robotics RL algorithms \cite{Lillicrap2015ContinuousCW} 
can achieve returns as good as planning algorithms on the MuJoCo  benchmark \cite{todorov2012mujoco}. However, this does not tell us about the generalization capability of the RL algorithms to unseen yet similar environments, since in the previous works the train and test environments have been the same. 
To study this problem, this work focuses on Adaptive RL. In this setting, the learner is trained on a set of environments drawn from a distribution with the goal of adapting and obtaining a good reward on an unseen test environment drawn from the same distribution. The property of being able to adapt to similar environments is crucial for deploying RL agents in the real world, where the dynamics often depend on fluctuating external conditions, and also for transferring RL policies from simulation to reality. 

Markov Decision Process (MDP) is a powerful modeling tool and can be used to describe dynamic behavior in a broad range of fields. However, dynamics often depend on external factors or variables. To model the effect of these external variables, we use the notion of Contextual MDP (cMDP), introduced in \cite{hallak2015contextual}. In this framework, context is used to refer to external variables affecting the dynamics of the system. Informally, a cMDP is defined as a family of MDPs $\M(c)=(S,A,P_c,R)$ characterized by context $c\in C$. The family shares the state space, action space and rewards but the transition probability matrix depends on context. Our goal is to train a policy $\pi$ that can adapt to $c$ which is unknown at deployment time. 

In \cite{yu2017uposi}, the problem is tackled by decomposing it into two parts: first, given the state and the true context, find the optimal action, and second, given the state-action trajectory, estimate the context. It defines universal policy, a map $K:S\times C \to A$ that, given the true context and state, takes the optimal action. Another way of conceptualizing the universal policy is to think about it as a map $K: C \to \Pi$, where $\Pi$ is the space of policies i.e. given a context universal policy outputs a policy and we will use this definition of $K$ in the rest of the work. The context estimate $\hat{c}_{t} = \phi(s_{t:t-k}, a_{t-1:t-k})$ depends on the state action history. Finally, the adaptive policy is given by $\pi=K(\hat{c}_t)$. One round of training is defined as updating the estimator $\phi$ using the entire available dataset to obtain $\phi'$ and collecting more data using the updated adaptive policy $K(\phi')$. We do multiple such rounds of training. This alternating procedure helps by increasing the diversity of the train data. 

We discover a simple surprising trick that improves the performance of the adaptive policy. 
The trick, called random data deletion, is that we delete a randomly chosen fraction of data. When random data deletion is applied to a multi-round training method it has higher probability of deleting data from older rounds. Intuitively, random data deletion boosts performance since it maintains data diversity while reducing the number of samples from older rounds that are less useful.
To this end, the contributions of this work are as follows:
\begin{enumerate}
    \item Introducing random data deletion which improves the performance of the adaptive policy.
    \item Empirically validate the performance gain of random data deletion for Multi-layer perceptron (MLP) and recurrent neural network (RNN). The experiments show that for MLP we can get improvements upto 30\% and for RNNs we get an improvement of 6\% on average.
    \item Random data deletion helps improve the performance of smaller MLPs. In fact, it helps an MLP with 5x fewer parameters outperform the wider MLP.
    \item 
    We show that, for convex loss functions, random data deletion leads to better test loss when there is a distribution mismatch. When specialized to the case of ridge regression we can quantify the mismatch required.
\end{enumerate}

\section{RELATED WORKS}
To improve the generalization of RL algorithms, recent research has focused on using benchmarks based on procedural generation. In OpenAI ProcGen \cite{cobbe2020leveraging}, the environment is generated by following a pseudo-random procedure starting with a user-defined seed. Other benchmarks with a similar structure are Alchemy \cite{wang2021alchemy}, MazeExplorer \cite{Harries2019MazeExplorerAC}, MetaDrive \cite{li2022metadrive}. These benchmarks share the limitation that the variability in environments is difficult to control, and therefore it is difficult to perform a comprehensive evaluation of the generalizability of RL algorithms. The CARL benchmark used in this paper solves this issue by providing the user with features to control every aspect of the generated environment.

Adaptive RL has goals similar to zero-shot RL where the goal is to train on a set of environments and test on a different set of environments. The difference between two settings is that for Adaptive RL we need a meaningful low dimensional context whereas this is not a strict requirement for zero-shot RL setting. Some methods from the zero-shot RL literature are still relevant and are described here. A common feature of several techniques used to solve the zero-shot RL problem is learning invariant features to generalize to unseen environments. In \cite{sonar2021invariant}, the authors get an invariant policy by learning a policy on top of an invariant representation. In \cite{bertran2020instance}, learn an ensemble of policies on subsets of training domain and then use mean policy of this ensemble to solve unseen tasks. These methods lead to policies that are independent of the variations in the environment while the approach followed by us involves using a policy that can adapt to the different environments. The trade off being that we need to be able to identify the context to take full advantage of the adaptive policy. The authors in \cite{kumar2021rma} take a similar approach where a context estimator is used to condition the policy for solving sim-to-real problem for legged robots. The difference between the proposed approach and their approach is that we directly condition the policy on context while they encode the context and the policy is conditioned on this encoded context. In \cite{zintgraf2020varibad}, a variational auto-encoder is trained to find a latent representation that can be used to predict the past and the future trajectory and the policy is conditioned on this latent representation. While our context estimator simply needs to solve a regression problem their auto-encoder needs to learn the distribution which is much harder but this also makes their method more suitable for general settings where the ground truth context is not available. For a more detailed survey of zero-shot RL research the reader is referred to \cite{kirk2023survey}.

\section{PROBLEM STATEMENT}
In this section, we give a formal definition of the cMDP framework and define the scope of the problem considered in this work. A cMDP is defined to be a tuple $(\C, \calS, \A, \M(c), \D)$ where $\C$ is the context space, $\calS$ is the state space, $\A$ is the action space, $\M$ maps context $c$ to the MDP $\M(c) = (\calS, \A, p(s'|s,a,c), r(s,a), \rho)$ where $p(s'|s,a,c)$ is the probability of transitioning from state $s$ to state $s'$ given you take action $a$ and the context is $c$, the reward function is denoted by $r(s,a)$ and the initial state distribution by $\rho$, and $\D$ is the distribution over context space $\C$. Note that in the above formulation the transition probability depends on context. In general, the reward function $r(\cdot)$ and the initial state distribution $\rho$ may also depend on context. The advantage of this formulation is that it allows us to give precise definition of the train and test set of environments.  

The goal of the learner is, given a set of contexts, to learn a policy that has the best worst-case performance. Furthermore, the learner has access to the ground-truth context during training but not during inference. Formally, let $C_{tr}$ be a set of contexts drawn from the distribution $\D$ of cMDP $(\C, \calS, \A, \M(c), \D)$. The policy trained on this set $C_{tr}$ is denoted by $\pi_{tr}$. Define value function as 
\begin{equation*}
    J^{\pi, c} = E_{\pi, \rho} \left[\sum_{t=0}^\infty \gamma^t r(s_t, a_t) \right],
\end{equation*}
where $\gamma$ is the discount factor, $E_{\pi, \rho} [\cdot]$ indicates that action is taken according to policy $\pi$ and the initial state $s_0$ is drawn from distribution $\rho$. Note that context affects $J^{\pi, c}$ through the transition probability $p(s'|s,a,c)$. The goal of the learner is to find policy $\pi^*$ such that
\begin{equation*}
    \pi^* = \arg \max_{\pi_{tr}} \min_{c\in\C} \; J^{\pi_{tr}, c}.
\end{equation*} 

\section{MAIN METHOD}
In this section, we describe the training procedure for the universal policy and the context estimator. To learn the universal policy, use the augmented state $z_t:=[s_t,c]$ and after every episode of training change the environment by drawing a new context from the training set $C_{tr}$. With these modifications, 
a standard off-the-shelf RL algorithm of one's choice can be used to train universal policy.

\begin{algorithm}
    \caption{Training context estimator with data deletion}
    \begin{algorithmic}
        \State \textbf{Input:} Universal policy $K$, Training context set $C_{tr}$, data retention fraction $\alpha \in (0,1)$.
        \State Initialize data buffer $B$, estimator $\phi_\theta$ with weights $\theta_0$.
        \For{$i=1, 2, \dots,$} 
        \For{$j=1,2,\dots$} 
        \State Choose context $c\in C_{tr}$ and initialize MDP $\M(c)$.
        \If{$i=1$} 
        \State Set $\pi_{use} = K(c)$.
        \Else
        \State Set $\pi_{use} = K(\phi_{\theta_{i-1}})$.
        \EndIf
        \State Collect an episode length trajectory $H\!:=\!\{s_t,a_t\}_{t=1}^T$ on $\M(c)$ using $\pi_{use}$. 
        \State Add $(H,c)$ to data buffer $B$.
        \EndFor
        \State Sample a minibatch of data points $((s_{\tau:\tau-k}, a_{\tau-1:\tau-k}), c)$ from $B$.
        \State Calculate $\theta_{i+1}$ by performing a gradient descent step on $\|\phi_{\theta_i}(s_{\tau:\tau-k}, a_{\tau-1:\tau-k}) - c\|^2$. 
        \State \textcolor{blue}{Delete $1-\alpha$ fraction of trajectories in buffer $B$. 
        }
        \EndFor
    \end{algorithmic}
    \label{alg: train estimator}
\end{algorithm}

However, the universal policy requires true context value which is not available during test time and so we use an estimate of the true context given by the estimator $\phi$. The estimator $\phi$ takes the last $k$ state-actions pairs $(s_{t:t-k}, a_{t-1:t-k})$ to produce the context estimate $\hat{c}_t$. But even the information contained in older state-action pairs $(s_{t-k:0}, a_{t-k:0})$ might be useful in some cases, so a summary statistic of these older state-action pairs could greatly improve the performance of the context estimator. For this purpose, we use recurrent neural networks (RNNs) for context estimation. RNNs typically use the use hidden vector from previous time step to obtain the output. If trained properly the hidden vector should summarize the older state-action pairs. 

The context estimator $\phi$ is learned using a supervised framework. For supervised training, we need trajectories $\{s_t, a_t\}_{t=1}^T$ labeled with the true context $c$ of the environment. 
We will use neural networks for context estimation, so the estimator $\phi_\theta$ will be characterized by its weights $\theta$ and it is trained to minimize the loss
\begin{equation*}
    \min_{\theta} \ell(\phi_{\theta},c) = \sum_{\tau=1}^T \|\phi_{\theta}(s_{\tau:\tau-k}, a_{\tau-1:\tau-k}) - c\|^2,
\end{equation*}
where $s_{t<0},a_{t<0}$ are set to be zero. The loss is simply the norm distance between the true context $c$ and the predicted context $\hat{c}_t = \phi_{\theta}(s_{\tau:\tau-k}, a_{\tau-1:\tau-k})$.
To minimize the loss, we use stochastic gradient descent. A key step in the training procedure is to collect trajectories. In the first round of training we collect trajectories using the policy $\pi = K(c)$, which is the universal policy with the true context, and in the subsequent rounds $i>1$ we use the policy $\pi = K(\phi_{\theta_{i-1}})$, which is the universal policy with the context estimator of the previous round. 
If we train the estimator $\phi$ for a single round, then it will have only seen trajectories where the context in the policy $K(c)$ is the same as the context of the environment $\M(c)$. This is not sufficient since during test time it will inevitably have to deal with the case where the context of the policy $K(\hat{c})$ and the context of the environment $\M(c)$ are different $c\neq \hat{c}$. Thus it is necessary to include trajectories in the training data where there is a mismatch between the context in the policy and that in the environment.

\begin{table} 
    \centering
    \begin{tabular}{|l|c|c|c|c|c|}
        \hline
        Round No. & 1 & 2 & 3 & 4 & 5 \\
        \hline
        Prob. of survival & 1.0 & 0.8 & 0.64 & 0.51 & 0.41 \\
        \hline
    \end{tabular}
    \caption{Probability of survival of a trajectory in random data deletion with $\alpha=0.8$.}
    \label{tab:prob of survival}
    \vspace{-7mm}
\end{table}
We discover that a simple trick can further improve the performance. It is called random data deletion and in this after each round we delete $1-\alpha$ fraction of the stored trajectories, where $\alpha$ is a hyperparameter to be tuned. The data in round $i>1$ is collected using policy $K(\phi_{\theta_{i-1}})$ which is different from the policies of the previous rounds and thus the data comes from a different distribution. So, ideally we would like to train the context estimator using only the most recently collected data but this decreases data diversity and the coverage of the data. Random data deletion puts higher probability on deleting older data. In Table \ref{tab:prob of survival} we present evolution of the probability of survival of a trajectory as the rounds progress. Note that the probability of survival decays exponentially. Thus random data deletion helps with reducing older less useful data while maintaining data diversity.

We investigate this theoretically in section \ref{sec: data del thm}. The analysis considers a simplified setting nevertheless when specialized to the case of ridge regression we find that data deletion helps when we have a distribution mismatch and a moderately valued regularization coefficient.
The pseudocode for the algorithm for training the context estimator is detailed in Algorithm \ref{alg: train estimator}. The implementation can be found at {\small \url{https://github.com/ParamB11/repo-arl}}.

\section{Simulation Results}

\begin{table} 
    \centering
    \begin{tabular}{lcccc}
        \toprule
         & Wide MLP & Narrow MLP & LSTM & GRU \\
        \midrule
        \# params & 47426 & 8322 & 7362 & 5538 \\
        macs & 46976 & 8128 & 7616 & 5760 \\
        \bottomrule
    \end{tabular}
    \caption{Comparison of parameters and MACs of different models used for context estimator.}
    \label{tab:model_comparison}
    \vspace{-10mm}
\end{table}

\begin{table*}
    \centering
    \begin{tabular}{|l|l|c|c|c|c|}
        \hline
         Env & Context & UP(mlp(wide)) & UP(mlp(narrow)) & UP(lstm) & UP(gru) \\
        \hline
        \multirow{3}{*}{Pendulum} 
        & g & 0.80 $\pm$ 0.17 & 0.95 $\pm$ 0.29 & \textbf{0.72 $\pm$ 0.22} & \textbf{0.72 $\pm$ 0.13} \\
        & l & 1.01 $\pm$ 0.29 & 0.87 $\pm$ 0.16 & \textbf{0.84 $\pm$ 0.14} & 0.85 $\pm$ 0.10 \\
        & (g,l) & 1.30 $\pm$ 0.24 & 1.49 $\pm$ 0.23 & 1.24 $\pm$ 0.36 & \textbf{1.11 $\pm$ 0.34} \\
        \hline
        \multirow{3}{*}{LunarLander} 
        & GRAVITY\_Y & \textbf{0.18 $\pm$ 0.03} & 0.32 $\pm$ 0.05 & 0.37 $\pm$ 0.29 & 0.55 $\pm$ 0.60 \\
        & GRAVITY\_X & 1.47 $\pm$ 0.11 & 0.93 $\pm$ 0.14 & 0.24 $\pm$ 0.04 & \textbf{0.21 $\pm$ 0.06} \\
        & G\_X,G\_Y & 1.4 $\pm$ 0.39 & 1.09 $\pm$ 0.55 & \textbf{0.56 $\pm$ 0.09} & 1.28 $\pm$ 0.80 \\
        \hline
        \multirow{3}{*}{Acrobot} 
        & LINK\_MASS\_1 & 0.67 $\pm$ 0.07 & 0.41 $\pm$ 0.09 & \textbf{0.32 $\pm$ 0.09} & 0.46 $\pm$ 0.16 \\
        & LINK\_LEN\_1 & \textbf{0.40 $\pm$ 0.13} & 0.61 $\pm$ 0.10 & 0.69 $\pm$ 0.21 & 0.63 $\pm$ 0.10 \\
        & (LL1, LM1) & 0.70 $\pm$ 0.12 & 0.79 $\pm$ 0.23 & \textbf{0.67 $\pm$ 0.10} & 0.69 $\pm$ 0.17 \\
        \hline
        \multirow{3}{*}{Mt. Car} 
        & gravity & \textbf{0.15 $\pm$ 0.02} & 0.16 $\pm$ 0.02 & \textbf{0.15 $\pm$ 0.03} & 0.18 $\pm$ 0.06 \\
        & force & 0.24 $\pm$ 0.06 & 0.24 $\pm$ 0.04 & \textbf{0.22 $\pm$ 0.03} & 0.24 $\pm$ 0.03 \\
        & (force, gravity) & 0.18 $\pm$ 0.01 & 0.21 $\pm$ 0.05 & \textbf{0.17 $\pm$ 0.10} & 0.18 $\pm$ 0.04 \\
        \hline
        \multirow{2}{*}{Ant} 
        & friction & \textbf{0.16 $\pm$ 0.05} & 0.17 $\pm$ 0.04 & 0.21 $\pm$ 0.03 & 0.21 $\pm$ 0.03 \\
        & gravity & \textbf{0.45 $\pm$ 0.06} & 0.46 $\pm$ 0.09 & 0.51 $\pm$ 0.06 & 0.49 $\pm$ 0.08 \\
        \hline
        \multirow{3}{*}{Halfcheetah} 
        & mass\_torso & \textbf{0.06 $\pm$ 0.01} & \textbf{0.06 $\pm$ 0.01} & 0.08 $\pm$ 0.02 & \textbf{0.06 $\pm$ 0.02} \\
        & friction & \textbf{0.03 $\pm$ 0.00} & \textbf{0.03 $\pm$ 0.01} & 0.04 $\pm$ 0.01 & \textbf{0.03 $\pm$ 0.01} \\
        & (friction, mass\_torso) & 0.14 $\pm$ 0.02 & \textbf{0.13 $\pm$ 0.02} & 0.18 $\pm$ 0.03 & 0.17 $\pm$ 0.02 \\
        \hline
        & Average & 0.55 & 0.52 & \textbf{0.42} & \textbf{0.42} \\
        \hline
    \end{tabular}
    \caption{Comparison of robustness gap of different models across various environments and context parameters.}
    \label{tab:robustness_gap_comparison}
    \vspace{-10mm}
\end{table*}

In this section, we present results that demonstrate the effectiveness of the new training procedure and compare random data deletion with other deletion procedures. For the context estimator, we consider the following architectures: 1. Multi-layer perceptron (MLP) 2. LSTM 3. GRU. 
We consider two MLPs of different configurations: the Wide MLP having hidden layers of width (256,128,64) and the Narrow MLP having hidden layers of width (128, 32, 32). 
For the LSTM and GRU the size of the hidden state is set to 32. In Table \ref{tab:model_comparison} we present the number of parameters and the number of multiply-accumulate (mac) operations required in a forward pass of the architectures considered. The number of mac operations gives a sense of time required for inference in these architectures. The numbers in Table \ref{tab:model_comparison} are reported by fixing the input dimension to 23 and the output dimension to 2. Note that Wide MLP has 5x more parameters compared to other architectures. 

We evaluate these architectures on various Classical control and Box2D environments of the Gym suite. Additionally, we also evaluate them on challenging Brax environments such as Ant and Halfcheetah. We vary context parameters such as gravity, length, mass, etc. in these environments. The evaluation metric used to compare the performance of different architectures is the robustness gap measured with respect to UP-\textit{true}, which is the policy $K(c)$ where $c$ is the true context. The performance of $K(\phi)$ with a perfect estimator should be close to that of UP-\textit{true} and so we report the distance to the performance of UP-\textit{true}. Let $C_{ev}$ be the set of evaluation contexts and $J^*(c)$ be the average episode reward of UP-\textit{true} on MDP $\M(c)$ then the robustness gap of policy $\pi$ is given by 
\begin{equation*}
    \max_{c\in C_{ev}} \frac{J^*(c) - J_{\pi}(c)}{|J^*(c)|},
\end{equation*}
where $J_{\pi}(c)$ is the average episode reward of policy $\pi$ on MDP $\M(c)$. So the lower the robustness gap the closer the performance is to UP-\textit{true} and the better it is.

For training, we sample 300 contexts and use these to train the universal policy and the context estimator. To demonstrate that the algorithm works with any kind of RL algorithm, we use different training algorithms for each environment. For example, for Pendulum we use DDPG to train the universal policy, in LunarLander we use PPO, etc. 
The context estimator is trained for 6 rounds and state-action history of length $k=4$ is used as input. For each training round, we collect 20k new samples. In addition, we ensure that while training recurrent models, we preserve the trajectory structure of the samples. However, while training MLP this is not required. The Table \ref{tab:robustness_gap_comparison} presents the robustness gap of the different architectures considered. The reported values are the mean and the standard deviation of robustness gap over 5 seeds. Also note that the reported values are obtained after tuning the data retention fraction $\alpha$ for these architectures. 
The details of the tuning method are given in subsection \ref{sec: tune data_retain_frac}.
The recurrent models, LSTM and GRU, have the best performance. They even outperform Wide MLP which has 5x more parameters than them.

\begin{remark}[Understanding Table \ref{tab:robustness_gap_comparison}]
    To understand the numbers in Table \ref{tab:robustness_gap_comparison} consider the robustness gap of UP(lstm) and UP(mlp(narrow)) in the LunarLander environment with the context GRAVITY\_X. UP(lstm) has a robustness gap of 0.24, ignoring the standard deviation for now, which means that in the worst case, its episode reward is 24\% less than the best policy, UP-\textit{true}. Similarly, for UP(mlp(narrow)), its worst-case performance is 93\% less than UP-\textit{true}. So, UP(lstm) is 69\% better than UP(mlp(narrow)).
\end{remark}

\subsection{Tuning data retention fraction}
\label{sec: tune data_retain_frac}

In this subsection, we detail the procedure followed to tune the hyperparameter data retention fraction and also highlight the benefits of data deletion. We want to find $\alpha$ that gives the best average robustness gap for each of the architecture environment pairs considered. We restrict the search space of $\alpha$ to the set $\{0.5, 0.8, 0.9, 1.0\}$. In Table \ref{tab:tune data_retain_frac}, we report the robustness gap for UP(lstm) for different environments and $\alpha$ values. Note that the reported values are average over the context variations considered for each environment. Observe that deleting as much as 50\% of the data gives the best performance in 4 out of the 6 environments considered. The column corresponding to the bold value is the optimal $\alpha$ for the environment corresponding to the bold value. 


\begin{table} 
    \centering
    \begin{tabular}{|l|c|c|c|c|}
        \hline
        & $\alpha = 0.5$ & $\alpha = 0.8$ & $\alpha = 0.9$ & $\alpha = 1.0$ \\
        \hline
        Pendulum & 0.95 & 1.06 & 0.94 & \textbf{0.93} \\
        LunarLander & 0.51 & 0.52 & \textbf{0.39} & 0.59 \\
        Acrobot & \textbf{0.56} & \textbf{0.56} & 0.58 & 0.66 \\
        Mt. Car & \textbf{0.18} & 0.2 & 0.2 & 0.25 \\
        Ant & \textbf{0.36} & 0.42 & 0.39 & 0.42 \\
        Halfcheetah  & \textbf{0.1} & \textbf{0.1} & \textbf{0.1} & 0.12\\
        \hline
    \end{tabular}
    \caption{Comparing average robustness gap for UP(lstm) for different values of $\alpha$.}
    \label{tab:tune data_retain_frac}
    \vspace{-5mm}
\end{table}

To demonstrate the benefits of data deletion, we compare the average robustness gap with zero data deletion $\alpha=1.0$ and with some data deleted $\alpha\in \{0.5, 0.8, 0.9\}$. Table \ref{tab:effect of data deletion} presents the robustness gap averaged over all environments for each of the architectures considered. 
Data deletion gives significant benefits for smaller models Narrow MLP, LSTM and GRU. 
In fact, it aids the Narrow MLP, which has 5x less parameters, outperform the Wide MLP.

\begin{table} 
    \centering
    \begin{tabular}{|l|c|c|}
        \hline
        & Best $\alpha \in \{0.5,0.8,0.9\}$ & $\alpha = 1.0$ \\
        \hline
        UP(mlp(wide)) & 0.62 & \textbf{0.57} \\
        UP(mlp(narrow)) & \textbf{0.53}  & 0.82 \\
        UP(lstm) & \textbf{0.43} & 0.5 \\
        UP(gru) & \textbf{0.43} & 0.48 \\
        \hline
    \end{tabular}
    \caption{Effect of data deletion for different architectures}
    \label{tab:effect of data deletion}
    \vspace{-10mm}
\end{table}

\subsection{Ablation: Comparing with alternative deletion methods}
\begin{table} 
    \centering
    \begin{tabular}{lccc}
        \toprule
        & \textbf{stale} & \textbf{random} & \textbf{uniform} \\
        & \textbf{deletion} & \textbf{deletion} & \textbf{deletion} \\
        \midrule
        UP(mlp(narrow)) & 0.75 & 0.94 & \textbf{0.67} \\
        UP(lstm) & 0.74 & \textbf{0.66} & 0.74 \\
        UP(gru) & \textbf{0.65} & 0.66 & 0.91 \\
        \bottomrule
    \end{tabular}
    \caption{Comparing deletion strategies}
    \label{tab:different deletions}
    \vspace{-10mm}
\end{table} 
In this subsection, we compare random data deletion with alternative methods of data deletion. 
In random data deletion as the rounds progress the older trajectories are more likely to be deleted. 
The aim in this section is to compare with other methods that balance emphasis on the old and new trajectories differently as compared to random deletion.

The first method is called stale deletion where we simply delete the oldest trajectories. In the second method, called uniform deletion, we keep a buffer of the all the collected trajectories and we sample a fraction of the trajectories from this buffer to be used for training. 
In uniform deletion trajectories are never deleted but whether they are used for training or not is a stochastic decision. Unlike in the other two methods where trajectories are deleted permanently. 
In stale deletion we do not keep any old trajectories. While in random deletion old trajectories are more likely to be deleted and in uniform deletion the old and the new trajectories are equally likely to be a part of the train data. Note that all the methods have access to roughly the same amount of data.

In Table \ref{tab:different deletions}, we report the average robustness gap over Pendulum and LunarLander. 
Stale deletion is not a clear winner for any of the architectures suggesting that older trajectories are necessary for maintaining the diversity of the train data. Random data deletion gives the very good results for the recurrent architectures, while uniform deletion gives the best result for Narrow MLP. This suggests that different deletion methods might be suited for different architectures.

\section{Data Deletion in a Simplified Setting}
\label{sec: data del thm}
\begin{figure*} 
    \centering
    \begin{tabular}{c c}
        \includegraphics[width=0.3\linewidth]{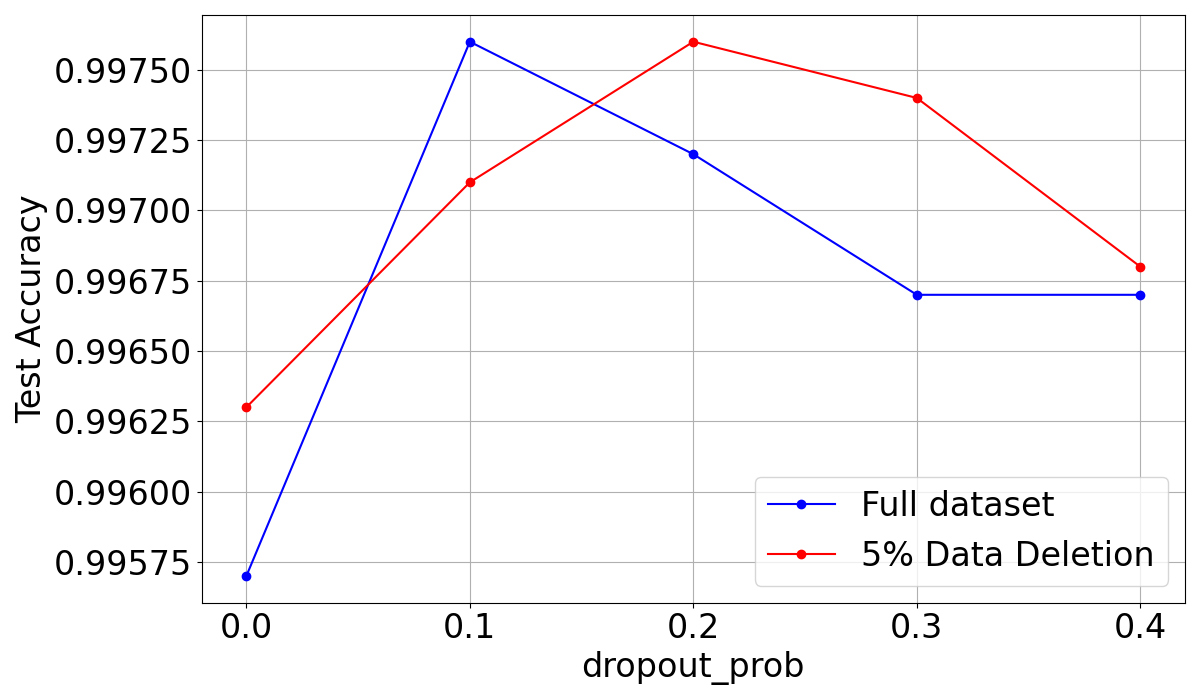} & 
        \includegraphics[width=0.3\linewidth]{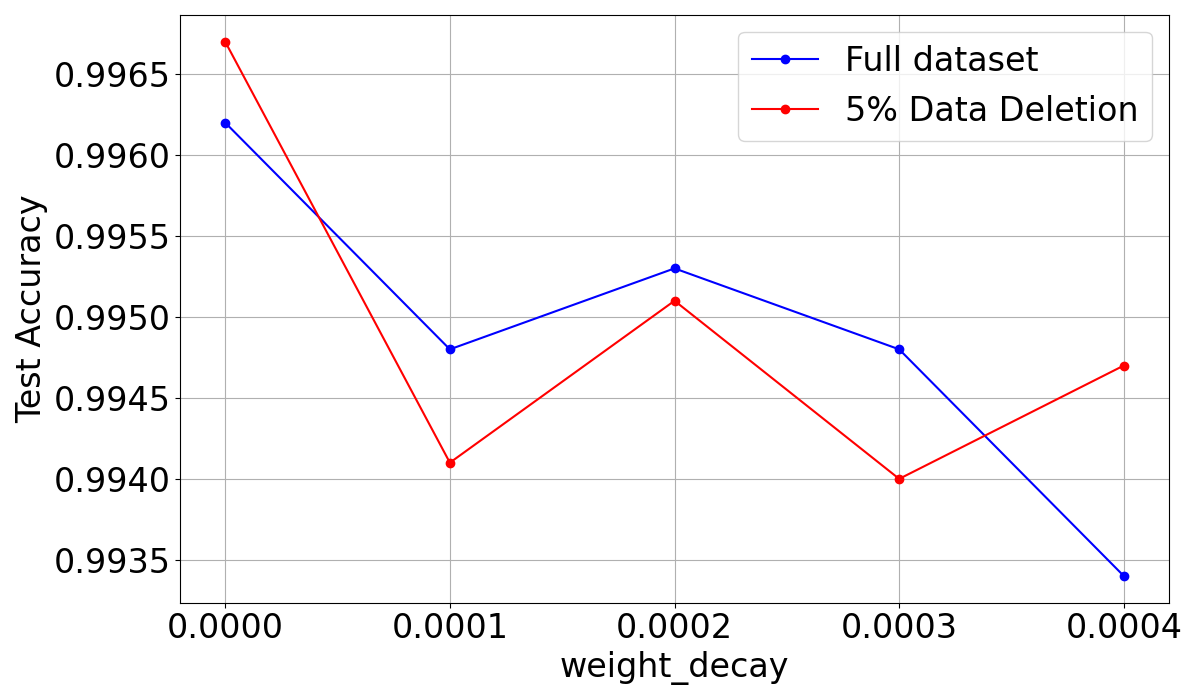}
        \\
        (a) & (b) \\
        \includegraphics[width=0.3\linewidth]{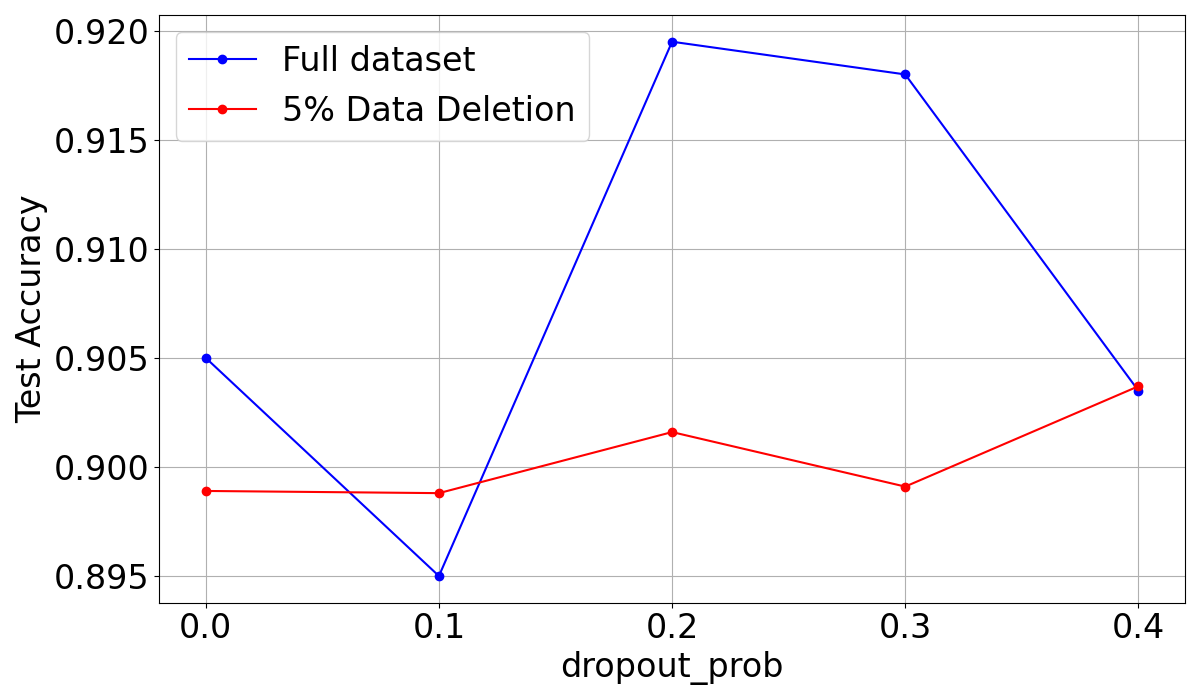} &
        \includegraphics[width=0.3\linewidth]{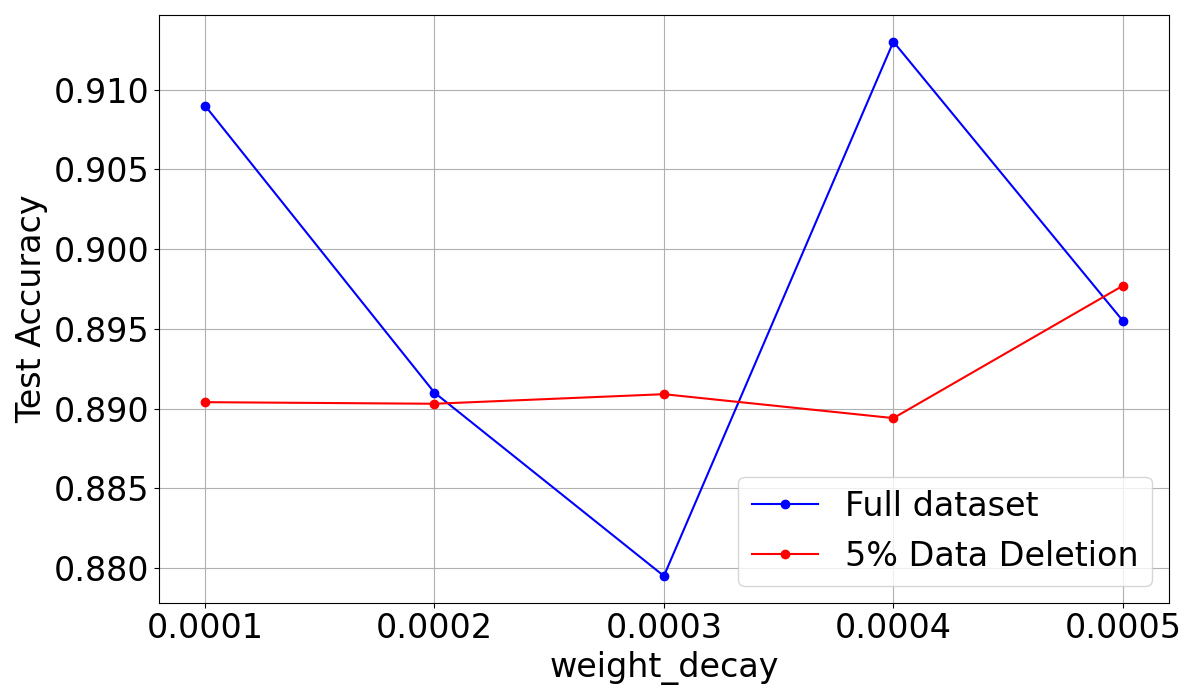} \\
        (c) & (d)
    \end{tabular}
    \caption{Plot test accuracy with an extra class included in the train data. Model trained on full dataset (blue) and model trained with 5\% data excluded (red). Top row plots for MNIST and the bottom row for CIFAR.}
    \label{fig:train neq test}
    \vspace{-5mm}
\end{figure*}

In this section, we study the case of regularized 
risk minimization, where we show that under suitable assumptions random data deletion leads to a lower test error. We do experiments on MNIST and CIFAR dataset which gives additional evidence of the beneficial deletion phenomenon,  Investigating this simple, didactic case provides a better understanding of the phenomenon of data deletion. 
The setting considered is a simple regularized risk minimization problem with the twist that the train data is sampled from the distribution $P_{\train}$ while the test data is sampled from $P_\test$.

\begin{remark}[Comparing with empirical setting]
    The case when the two distributions are different $P_\train \neq P_\test$ is similar to that of training the context estimator. The context estimator $\phi_i$ is trained using data collected in previous rounds which is generated using policy $K(\phi_j)$ where $j<i$. While it is expected to have good performance on data generated by policy $K(\phi_i)$. However the setting is also a simplification since it does not take into account the relation between $P_\train$ and $P_\test$. For the case of context estimator both the $P_\train$ and $P_\test$ depend on previous estimators $\phi_j$ with $j<i$. However in the setting under consideration we simplify it by considering different train and test distributions. Moreover, in the theorem we only consider the case of deleting a single point whereas in practice we consider deleting 10-50\% of data. Also, since we are using neural networks the strong convexity assumption (see Assumption \ref{as:strong convexity} defined later) is typically not satisfied.
\end{remark}
\emph{Setup.} Let $\ptr$ denote a training distribution over a sample space $\mathcal{Z}$: typically, $\mathcal{Z} = \mathcal{X} \times \mathcal{Y},$ where $\mathcal{X}$ is a feature space and $\mathcal{Y}$ is a label space ($[1:K]$ or $\mathbb{R}$). Likewise, we let $\pte$ denote the test distribution. As discussed above, $\ptr \neq \pte$ in general, and the setting of interest is one where we have data drawn from $\ptr$, but wish to evaluate performance on $\pte$. We consider the supervised learning problem over a parametric function class, where we denote the parameters by $w \in \mathbb{R}^d$. Typically, learning over such a class is done by minimising an `empirical risk,' induced by a sample $S = \{z_i\}_{i = 1}^N \overset{\textrm{i.i.d.}}{\sim} \ptr$, i.e., the objective 
\begin{align*}
    F_S(w):=\sum_{i=1}^{N}\ell(w;z_i)+\lambda R(w),
\end{align*}
where $\ell\colon\R^{d}\times\mathcal Z\to\R$ is the loss, $R\colon\R^{d}\to\R$ is the regularizer, and $\lambda > 0$ is the regularisation strength. We let $\hat{w} := \arg\min_w F_S(w)$ denote the optimum. The test risk of a parameter $w$ is denoted $L(w):=\E_{z\sim P_\test}\left[\ell(w;z)\right].$ Our goal is to study the effect of data deletion on $L(\hat{w}).$  To this end, for $i \in [N] = \{1, \cdots, N\}$, we set  \begin{equation}
    F_{S,-i}(w) = F_S(w) - \ell(w;z_i)
\end{equation} to be the objective with data point $i$ deleted, and let $\hat{w}_{-i} := \arg\min_w F_{S,-i}(w).$ After a uniform deletion, the expected test loss is $\mathbb{E}_{i \sim U([N])}[L(\hat{w}_{-i})],$ where $U([N])$ is the uniform law on $[N].$
Our results hold under the following assumptions, which are typically made for such analyses.
\begin{assumption}\label{as:smooth}
    \textbf{(Smooth loss)}  
    Let $\ell(\cdot;z)$ be of class $C^{2}$ and for every $z$ we assume $\norm{\nabla\ell(w;z)}\le G, \norm{\nabla^{2}\ell(w;z)}_{\mathrm{op}} \le H,$
    and the hessian is $\beta$‑Lipschitz, that is,
    \(
          \norm{\nabla^{2}\ell(w;z)-\nabla^{2}\ell(v;z)}_{\mathrm{op}}
          \le\beta\norm{w-v}.
    \)
\end{assumption}

\begin{assumption} \label{as:reg}
    \textbf{(Smooth regulariser)} Let 
    $R$ be of the class $C^{2}$, the gradient satisfies $\norm{\nabla R(w)}\le G_R$ and the hessian satisfies
    $\norm{\nabla^{2}R(w)}_{\mathrm{op}}\le H_R$ 
    and is $\beta_R$‑Lipschitz.
\end{assumption}

\begin{assumption} \label{as:strong convexity}
    \textbf{(Strong convexity)}  
    For every sample~$S$,  
    $F_S$ is $\alpha$‑strongly convex:
    \(\nabla^{2}F_S(w)\succeq\alpha I_d\) for all~$w$.
\end{assumption}

With these in hand, we state our main result, the proof of which is relegated to Appendix \ref{appx:main}.

\begin{theorem}[Beneficial deletion] \label{thm:main}
Let
\begin{align*}
    D &:= \nabla L(\hat{w})^\top H_0^{-1}\left( \frac{-1}{\lambda}\sum_{i=1}^N \nabla\ell\left(\hat{w};z_i\right) \right),
\end{align*} where $H_0 = \nabla^2 F_S(\hat{w})$. Further, define $C:= \frac{3HG^2}{\alpha^2} + \frac{G^3}{\alpha^3} (N \beta + \lambda \beta_R).$ If
\begin{align}
   \lambda D > \frac{CN}{2}, \label{eq:main condn}
\end{align}
then deleting a uniformly chosen training example strictly decreases the test loss, i.e., $\E_{i\sim U([N])}\left[L(\hat w_{-i})\right] < L(\hat w).$

\end{theorem}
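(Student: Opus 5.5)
The plan is a leave-one-out (influence-function) expansion of $\hat w_{-i}$ around $\hat w$, averaged over $i$. The averaged first-order term turns out to equal exactly $-\lambda D/N$, and the remaining error is controlled using Assumptions \ref{as:smooth}--\ref{as:strong convexity}.

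\emph{First-order term.} Since $\hat w$ minimizes $F_S$, we have
\[
\sum_{i}\nabla\ell(\hat w;z_i)+\lambda\nabla R(\hat w)=0,
\qquad\text{so}\qquad
-\tfrac1\lambda\sum_i\nabla\ell(\hat w;z_i)=\nabla R(\hat w).
\]
Hence $D=\nabla L(\hat w)^\top H_0^{-1}\nabla R(\hat w)$. At $\hat w$ the deleted objective has gradient $\nabla F_{S,-i}(\hat w)=-\nabla\ell(\hat w;z_i)$. A Newton step therefore suggests
\[
\Delta_i:=\hat w_{-i}-\hat w\approx H_0^{-1}\nabla\ell(\hat w;z_i).
\]
Averaging over $i\sim U([N])$ gives
\[
\frac1N\sum_i\nabla L(\hat w)^\top H_0^{-1}\nabla\ell(\hat w;z_i)=-\frac{\lambda D}{N}.
\]
So the averaged first-order change in test loss is $-\lambda D/N$. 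Condition \eqref{eq:main condn} says precisely that this exceeds $C/2$ in magnitude. It therefore suffices to show that, for each $i$, the total error is at most $C/2$ in absolute value, with the inequality made strict by \eqref{eq:main condn}.

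\emph{Size of the step.} From $\nabla F_{S,-i}(\hat w_{-i})=0$ we get $\nabla F_S(\hat w_{-i})=\nabla\ell(\hat w_{-i};z_i)$, while $\nabla F_S(\hat w)=0$. By $\alpha$-strong convexity of $F_S$ (Assumption \ref{as:strong convexity}),
\[
\alpha\|\Delta_i\|^2\le\langle\nabla F_S(\hat w_{-i})-\nabla F_S(\hat w),\Delta_i\rangle\le G\|\Delta_i\|,
\]
so $\|\Delta_i\|\le G/\alpha$.

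\emph{Error of the Newton step.} Write $0=\nabla F_{S,-i}(\hat w_{-i})=-\nabla\ell(\hat w;z_i)+\bar H_i\Delta_i$, where $\bar H_i=\int_0^1\nabla^2F_{S,-i}(\hat w+t\Delta_i)\,dt$. This gives
\[
\Delta_i-H_0^{-1}\nabla\ell(\hat w;z_i)=H_0^{-1}(H_0-\bar H_i)\Delta_i .
\]
Now $H_0-\bar H_i=\nabla^2\ell(\hat w;z_i)+\bigl(\nabla^2F_{S,-i}(\hat w)-\bar H_i\bigr)$. The first piece has operator norm at most $H$. The second is at most $\tfrac12(N\beta+\lambda\beta_R)\|\Delta_i\|$ by the Lipschitz Hessians in Assumptions \ref{as:smooth}--\ref{as:reg}; I would crudely bound the $N-1$ loss terms by $N$. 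Combining this with $\|H_0^{-1}\|\le1/\alpha$ and $\|\nabla L\|\le G$ (inherited from Assumption \ref{as:smooth} by averaging), the contribution to $|L(\hat w_{-i})-L(\hat w)-\nabla L^\top H_0^{-1}\nabla\ell_i|$ is at most
\[
\frac{HG^2}{\alpha^2}+\frac{G^3(N\beta+\lambda\beta_R)}{2\alpha^3}.
\]

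\emph{Curvature of $L$.} A second-order Taylor bound on $L$, whose Hessian is bounded by $H$, adds $\tfrac H2\|\Delta_i\|^2\le HG^2/(2\alpha^2)$.

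\emph{Main obstacle.} Summing these pieces gives a remainder of the form $c_1HG^2/\alpha^2+c_2G^3(N\beta+\lambda\beta_R)/\alpha^3$. The step I expect to be delicate is the constant bookkeeping needed to fit this remainder inside $C/2$. With the pieces above, the $G^3(\cdot)/\alpha^3$ part is exactly half that of $C$, but the $H$ terms total $\tfrac32HG^2/\alpha^2$, which matches $C/2$ only if the $HG^2/\alpha^2$ bound in the Newton-step error is sharpened to $HG^2/(2\alpha^2)$. I would first try to recover that factor by centering the Taylor expansions at the midpoint, or by bounding $\nabla^2\ell_i$ against the averaged Hessian rather than $H_0$. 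If this fails, I would accept a slightly larger constant in front of $C$.

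\emph{Conclusion.} Once the per-$i$ remainder is bounded uniformly by $C/2$, averaging gives
\[
\E_i[L(\hat w_{-i})]-L(\hat w)\le-\frac{\lambda D}{N}+\frac C2<0
\]
under \eqref{eq:main condn}.
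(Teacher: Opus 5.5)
Your proof is correct, and it closes with exactly the stated constant. The difficulty you flag under ``Main obstacle'' is an arithmetic slip. Since $C = \frac{3HG^2}{\alpha^2} + \frac{G^3}{\alpha^3}(N\beta+\lambda\beta_R)$, the target is $C/2 = \frac{3HG^2}{2\alpha^2} + \frac{G^3}{2\alpha^3}(N\beta+\lambda\beta_R)$. Your three pieces add up to precisely this:
\begin{itemize}
\item $\frac{HG^2}{\alpha^2}$ from the $\nabla^2\ell(\hat w;z_i)$ part of $H_0-\bar H_i$;
\item $\frac{G^3(N\beta+\lambda\beta_R)}{2\alpha^3}$ from the Lipschitz-Hessian averaging;
\item $\frac{HG^2}{2\alpha^2}$ from the curvature of $L$.
\end{itemize}
So no midpoint trick is needed. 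You also should not retreat to a larger constant, since that would prove something weaker than the theorem.

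The paper obtains the same split (the averaged first-order term $-\lambda D/N$ plus a per-$i$ remainder at most $C/2$) by a continuous route. It follows the minimizer path $w_i(t)$ of $F_S - t\,\ell(\cdot;z_i)$ and computes $w_i'(t) = H_{i,t}^{-1}\nabla\ell(w_i(t);z_i)$ by implicit differentiation. It then bounds $|f_i''(t)|\le C$ uniformly for $f_i(t)=L(w_i(t))$; the coefficient $3H$ arises as $H$ from the curvature of $L$ plus $2H$ inside $w_i''$. Finally it applies Taylor's theorem with Lagrange remainder. Its $f_i'(0)$ is exactly your Newton term $\nabla L(\hat w)^\top H_0^{-1}\nabla\ell(\hat w;z_i)$.

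Your discrete mean-value argument is more elementary and slightly more economical. It uses strong convexity only of $F_S$ itself (for $\lVert\Delta_i\rVert\le G/\alpha$ and $\lVert H_0^{-1}\rVert\le 1/\alpha$). It never inverts Hessians along a path and never invokes a third-derivative tensor. By contrast, the paper's Lagrange-remainder step tacitly needs $f_i$ to be twice differentiable. $C^2$ losses with merely Lipschitz Hessians do not guarantee this, so one would have to pass to an a.e./integral form of the remainder. Your argument only integrates a continuous Hessian, so it avoids that issue.

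In exchange, the paper's homotopy viewpoint makes the influence-function interpretation of $D$ transparent. It also controls partial down-weighting $t\in(0,1)$ with the same bound.
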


\emph{Proof Sketch.} We analyze the effect of deleting a point $i$ by studying the continuous path $F_{S,t,i}(w) = F_{S}(w) -t \ell(w;z_i)$. Notice that $F_{S,1,i} = F_{S,-i}$. Letting $w_i(t) := \arg\min_w F_{S,t,i}(w),$ the test loss along this path is $f_i(t) := L(w(t))$. The quantity $D$ is proportional to $\mathbb{E}_i[ f_i'(0)]$, while the quantity $C$ is a bound on $\sup_{t,i} |f_i''(t)|$. The effect of deletion is then captured by Taylor expanding $\frac1N \sum f_i(t)$ at $0$, and the condition arises by ensuring that the first-order behaviour dominates $C$ appropriately. \hfill $\blacksquare$

Let us now interpret the conditions of Theorem~\ref{thm:main} to better elucidate the result. The term $H_0^{-1} \sum \nabla \ell(\hat{w}; z_i)$ essentially captures the average change in $\hat{w}$ upon deleting a random entry. By demanding that $D$ is large, we are in essence demanding that this change is a descent direction for $L(w)$ at $\hat{w},$ which is a natural condition. The quantitative structure of the bound emerges by controlling the curvature of the loss surfaces, which limits the degree to which each $\hat{w}_{-i}$ moves.  We also point out that the condition of large $D$ requires a distribution shift in order to hold, at least for large $N$. Indeed, if $\pte = \ptr,$ then as $N \to \infty,$ $\frac1N\sum \nabla\ell(\hat{w};z_i) \to \nabla L(\hat{w})$, which forces $D < 0$.


\emph{Application to Ridge Regression.} To further characterize Theorem~\ref{thm:main}, as well as to demonstrate its nontriviality, we next explicitly study its implications on the classical ridge regression setting, where $z = (x,y) \in \mathbb{R}^d \times \mathbb{R}, \ell(w;z) = (w^\top x - y)^2$, and $R(w) = \|w\|_2^2$.
The train samples $(x,y) \sim P_\train$ satisfy $y=w_\star^\top x + n$ where $n\sim \mathcal{N}(0,\sigma^{2})$ and $w_\star$ is the unknown parameter. The test distribution $(x,y) \sim P_\test$, where $y=w_\star^\top x$ and $\E_{x} [xx^\top]=I_d$. We also make the simplifying assumption that the training features satisfy that $\forall i, \|x_i\|_2 = R,$. 
Furthermore, define $X=[x_1, \dots, x_N], M=XX^T$ which has the eigendecomposition $M=U\diag(\mu_j)U^\top$ and $\tilde w = U^\top w_\star$. The main factor causing the mismatch between the train and the test distribution is the noisy labels and a natural way to measure its impact is to use signal-to-noise ratio $\snr\coloneq \frac{R^2\|w_{\star}\|^2}{\sigma^2}$. Our goal will be to give a bound for the $\snr$ for this special case. 
Specializing Theorem~\ref{thm:main} to this setting yields the following sufficient condition, as shown in Appendix~\ref{appx:ridge}. 
\begin{corollary}
    Let $k_1, k_2, k_3 > 0$ be constants satisfying \begin{align}
    \frac{2(1-k_3)}{(1+k_1)^3}  - (1+2k_2) > 0.\label{eq:k1k2k3 condn}
\end{align}
Let $\mu_{\max}$ 
denote the largest eigenvalue of $ M,$ and recall that $\snr := R^2 \|w_*\|^2/\sigma^2$. If $\lambda, \snr$ satisfy  \begin{align}
    \max\left\{\frac{\mu_{\max}}{k_1 R^2}, \right. &\left. \frac{19}{4k_2}, \sqrt{\frac{3}{4k_2}}  \right\} <\frac{\lambda}{R^2} < \frac{k_3 }{\snr}, \label{eq:lambda range} \\
    \snr &< \frac13 \biggl(\frac{2  (1-k_3)}{(1+k_1)^3}  - (1+2k_2)\biggr), \label{eq:snr condn}
\end{align} then uniform data deletion is beneficial in the ridge regression setup described above.\vspace{.2\baselineskip}
\end{corollary}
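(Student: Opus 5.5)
The plan is to verify condition \eqref{eq:main condn} of Theorem~\ref{thm:main} by computing every quantity explicitly for ridge regression. Here $\ell(w;(x,y)) = (w^\top x - y)^2$ and $R(w)=\|w\|^2$. The Hessian of $\ell$ is $2xx^\top$, so it is constant and $\beta=\beta_R=0$. Moreover $H = 2R^2$, and $\nabla^2 F_S = 2(M+\lambda I)\succeq 2\lambda I$, so $\alpha = 2\lambda$.

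\emph{Step 1: closed forms.} The ridge solution is $\hat w = (M+\lambda I)^{-1}X y$ with $y = X^\top w_\star + n$. The key observation is that the gradient sum is
\[
\sum_i \nabla\ell(\hat w;z_i) = 2X(X^\top\hat w - y) = -2\lambda\hat w ,
\]
which follows from first-order optimality. Hence
\[
-\tfrac1\lambda\sum_i\nabla\ell = 2\hat w, \qquad H_0^{-1} = \tfrac12 (M+\lambda I)^{-1}.
\]
Since $\E_x[xx^\top]=I$ and labels are noiseless under $\pte$, we have $L(w)=\|w-w_\star\|^2$ and $\nabla L(\hat w) = 2(\hat w - w_\star)$. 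Therefore
\[
D = 2(\hat w-w_\star)^\top (M+\lambda I)^{-1}\hat w .
\]
I will rotate by $U$ and write $\hat w = U\,\diag\!\big(\tfrac{\mu_j}{\mu_j+\lambda}\big)\tilde w + (M+\lambda I)^{-1}Xn$. The first term is the shrinkage bias, so $\hat w - w_\star$ has a component $-\lambda(\mu_j+\lambda)^{-1}\tilde w_j$ pointing against $\hat w$. The noise contributes a separate term.

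\emph{Step 2: bounding $D$ from below.} Substituting gives
\[
D = 2\sum_j \frac{\mu_j\tilde w_j}{(\mu_j+\lambda)^2}\Big(\tfrac{\mu_j}{\mu_j+\lambda}-1\Big)\tilde w_j + (\text{noise terms}) .
\]
The deterministic part is negative, of size about $\lambda\mu_j\tilde w_j^2/(\mu_j+\lambda)^3$. This is where the low-$\snr$ requirement enters: the sign must come from the noise terms. I would take $\E_n$ rather than a realization, treating the statement in expectation over the label noise, which I expect the appendix also does. Then the quadratic noise term $n^\top X^\top(M+\lambda I)^{-3}Xn$ has expectation $\sigma^2\sum_j \mu_j/(\mu_j+\lambda)^3$ with the favorable sign, and the cross terms vanish in expectation.

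Now I use $\mu_{\max}<k_1\lambda$, so $(\mu_j+\lambda)^3\le(1+k_1)^3\lambda^3$. I also use $\sum_j\mu_j=\tr M = NR^2$. This gives
\[
\lambda D \gtrsim \frac{2NR^2\sigma^2}{(1+k_1)^3\lambda^2} - (\text{bias}) .
\]
The bias is controlled by $\|w_\star\|^2$ and the relation $\lambda\snr < k_3R^2$, which produces the factor $(1-k_3)$.

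\emph{Step 3: bounding $C$.} With $\beta=\beta_R=0$, we get $C = 3HG^2/\alpha^2 = 6R^2G^2/(4\lambda^2)$. The gradient bound $G$ must be made explicit. On the relevant region $\|\nabla\ell\|\le 2R\,|w^\top x-y|$, and I would bound $|w^\top x - y|$ using $\|\hat w\|\le\|w_\star\|+$(noise), giving $G^2\lesssim R^2(\|w_\star\|^2R^2+\sigma^2)$. In units of $\sigma^2$ this becomes $(\snr+1)$-type terms. The constants $19/(4k_2)$ and $\sqrt{3/(4k_2)}$ should arise by requiring the leftover lower-order pieces, which scale as $R^2/\lambda$ and $R^4/\lambda^2$, to be at most $k_2$.

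\emph{Step 4: combining.} Dividing $\lambda D > CN/2$ by $NR^2\sigma^2/\lambda^2$ reduces it to an inequality of the form
\[
\frac{2(1-k_3)}{(1+k_1)^3} - (1+2k_2) - 3\,\snr > 0 ,
\]
which is exactly \eqref{eq:snr condn}, with \eqref{eq:k1k2k3 condn} ensuring the range is nonempty. The main obstacle is Step 3: $G$ is a uniform gradient bound, and squared loss has unbounded gradients. I would first try restricting to the bounded set containing the whole deletion path $\{w_i(t)\}$, which is legitimate since the proof of Theorem~\ref{thm:main} only uses the bounds along that path. I would then bound that set by $\|w\|\le\|Xy\|/\lambda$ and track constants carefully to land on the stated numbers.
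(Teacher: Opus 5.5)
Your Steps 1, 2 and 4 follow the paper. Its ridge formula $D=2\sum_j \frac{\mu_j}{(\mu_j+\lambda)^3}(\sigma^2-\lambda\tilde w_j^2)$ is exactly your noise-averaged $D$. The paper lower-bounds it by $\frac{2\sigma^2\tr(M)(1-k_3)}{(1+k_1)^3\lambda^3}$, and its final division using $\tr(M)=NR^2$ is the same as yours. One correction: to get the factor $\frac{1-k_3}{(1+k_1)^3}$, apply $\lambda\tilde w_j^2\le\lambda\norm{w_\star}^2<k_3\sigma^2$ inside each summand before using $(\mu_j+\lambda)^3\le(1+k_1)^3\lambda^3$. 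Bounding the noise part from below and the bias part from above separately, as your ``$-(\text{bias})$'' suggests, only gives $\frac{1}{(1+k_1)^3}-k_3$.

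The genuine gap is Step 3. The $k_2$-conditions in the $\lambda$-range are tailored to a specific bound that the paper asserts without derivation:
$C=2\bigl[\frac{3R^4}{\lambda^2}\norm{w_\star}^2+\sigma^2\bigl(\frac{R^2}{\lambda^2}+\frac{19}{4}\frac{R^4}{\lambda^3}+\frac34\frac{R^6}{\lambda^4}\bigr)\bigr]<\frac{2R^2}{\lambda^2}\bigl[3R^2\norm{w_\star}^2+(1+2k_2)\sigma^2\bigr]$.
The generic constant $C=3HG^2/\alpha^2$ cannot deliver a bound of this shape, for three reasons.
\begin{itemize}
\item $G$ must dominate $\norm{\nabla\ell(w_i(t);z_i)}=2R\,|x_i^\top w_i(t)-y_i|$ for every $i$ and $t$. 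The labels are Gaussian, so there is no deterministic finite $G$, and restricting $w$ to the ball $\norm{w}\le\norm{Xy}/\lambda$ does nothing about the $y_i$. A realization-wise $G$ is driven by $\max_i|n_i|$, whose square has expectation of order $\sigma^2\log N$ rather than $\sigma^2$.
\item The generic formula also uses the training constants $H=2R^2$ and $G\ge 2R|r_i|$ for the test-loss derivatives. The noise part of $C$ then comes out like $R^4\max_i r_i^2/\lambda^2$ instead of $2\sigma^2R^2/\lambda^2$, and dividing by $NR^2\sigma^2/\lambda^2$ no longer yields the stated $\snr$ condition.
\item Your bound $G^2\lesssim R^2(R^2\norm{w_\star}^2+\sigma^2)$ tacitly replaces a supremum by an expectation, which Theorem~\ref{thm:main} does not allow.
\end{itemize}

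What you need is to reopen the proof of Theorem~\ref{thm:main} and bound $f_i''$ per sample and in expectation over the noise, consistently with your treatment of $D$. The inequality $\E_i[\Delta_i]\le-\lambda D/N+\frac12\E_i|f_i''(\xi_i)|$ holds for each noise realization; you then take $\E_n$ of both sides. For ridge, set $A_t=M+\lambda I-t\,x_ix_i^\top\succeq\lambda I$ and $r_i(t)=x_i^\top w_i(t)-y_i$. Then $A_tw_i'=x_ir_i$ and $A_tw_i''=2x_ix_i^\top w_i'$. Using $\nabla L(w)=2(w-w_\star)$ and $\nabla^2L=2I$, this gives $|f_i''(t)|\le\frac{2R^2r_i(t)^2}{\lambda^2}+\frac{4R^3}{\lambda^2}\norm{w_i(t)-w_\star}\,|r_i(t)|$. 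Sherman--Morrison gives $r_i(t)=r_i(0)/\bigl(1-t\,x_i^\top(M+\lambda I)^{-1}x_i\bigr)$, hence $|r_i(t)|\le(1+R^2/\lambda)|r_i(0)|$, and $\E_n[r_i(0)^2]\le R^2\norm{w_\star}^2+\sigma^2$. This produces a noise term equal to $\frac{\sigma^2R^2}{\lambda^2}$ times a polynomial in $R^2/\lambda$, which is what the conditions $\lambda/R^2>19/(4k_2)$ and $\lambda/R^2>\sqrt{3/(4k_2)}$ are built to absorb. You still have to carry this through, including the cross term with $\norm{w_i(t)-w_\star}$, to reach the stated constants.
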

In other words, data deletion is beneficial in the setting of bounded pointwise $\snr$, there is a range of regularisation wherein deleting a data point uniformly at random improves the test loss. Importantly, we get a lower bound for $\frac{\sigma^2}{R^2 \|w_{\star}\|^2}$ which is precisely the mismatch between the train and the test distribution. 
We note that the condition on $(k_1, k_2,k_3)$ can be explicitly met. For instance, if $k_1=k_3=0.05, k_2=0.1$ satisfy $\frac{2(1-k_3)}{(1+k_1)^3}  - (1+2k_2) = 0.44$.

\subsection{Experiments on Supervised Learning}
Finally, to demonstrate wider occurrence of the beneficial deletion phenomenon, we present experiments that show the practical scenarios where data exclusion improves performance.
We focus on the supervised classification problem. 
We use the MNIST and CIFAR datasets for this experiment. To model presence of out-of-distribution data we include an extra class in the train set which doesn't appear in the test set.
In MNIST, we use a multilayer perceptron, the train set consists of digits $\{0,1,2\}$ and the test set includes digits $\{0,1\}$. In CIFAR, we train a Convolution Neural Network (CNN) on train set with classes $\{\text{airplane, automobile, bird}\}$ and the test set with classes $\{\text{airplane, automobile}\}$. For regularization we use dropouts and weight decay where weight decay refers to adding $\ell_2$ penalty on model weights. We compare the test accuracy of a model trained on the full dataset and model trained with 5\% data excluded. We present results in Figure \ref{fig:train neq test}. 
Observe that for both the datasets and both the choices of regularization data exclusion helps improve test accuracy for at least one regularization value.

\section{Conclusion}
We consider the problem of generalization to unseen environments and use the Contextual MDP framework to formalize this problem. 
We propose a novel procedure to train adaptive policies that simply deletes random samples.
This procedure strikes a good balance between keeping more useful recent data and maintaining data diversity.
The experiments show that random data deletion helps improve the performance of MLP and RNN. It is more useful for smaller neural networks and helps outperform a narrow MLP with 5x less parameters outperform the wide MLP.
Finally, analysis of ERM with convex loss also suggests that data deletion helps when there is a distribution mismatch. 

\bibliography{references}

\appendix
In this appendix we present the proof of Theorem \ref{thm:main} and prove the sufficient conditions for the ridge regression case.
\subsection{Proof of Theorem \ref{thm:main}} \label{appx:main}
%
%
We first show intermediate results as lemmas, and then present the final proof. Recall the notation $F_{S,t,i}$ and $w_i(t)$ from the main text. We first compute the derivative of $w_i(t)$. 
\begin{lemma}
\label{lem:wprime}
Under Assumptions \ref{as:smooth}-\ref{as:strong convexity}, for all $i$,
\begin{equation}\label{eq:wprime}
    w_i'(t) = H_{i,t}^{-1}\nabla\ell\left(w_i(t);z_i\right),
\end{equation}
where $H_{i,t} :=\nabla^{2}F_{S,t,i}\left(w_i(t)\right)$. In particular, at $t=0$, $w_i'(0)= H_0^{-1}g_i,$ where $H_0:=\nabla^{2}F_S(\hat w)$ and $g_i:=\nabla\ell(\hat w;z_i)$.
\end{lemma}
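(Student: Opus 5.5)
The plan is to apply the implicit function theorem to the first-order optimality condition along the path $t\mapsto F_{S,t,i}$. For each $t\in[0,1]$ and each $i$, write
\[
F_{S,t,i}(w)=F_S(w)-t\,\ell(w;z_i)=(1-t)F_S(w)+t\,F_{S,-i}(w).
\]
This expresses $F_{S,t,i}$ as a convex combination of $F_S$ and $F_{S,-i}$. Assumption \ref{as:strong convexity} makes $F_S$ $\alpha$-strongly convex. Reading that assumption as holding for every sample, and in particular for $S$ with $z_i$ removed, $F_{S,-i}$ is $\alpha$-strongly convex too. Hence $\nabla^2 F_{S,t,i}(w)\succeq \alpha I_d$ for all $w$ and all $t\in[0,1]$.

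This gives existence and uniqueness of the minimizer $w_i(t)$. It also shows that $H_{i,t}$ is invertible, with $\norm{H_{i,t}^{-1}}_{\mathrm{op}}\le 1/\alpha$.

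Next I define $\Phi(w,t):=\nabla F_{S,t,i}(w)=\nabla F_S(w)-t\nabla\ell(w;z_i)$. Assumptions \ref{as:smooth} and \ref{as:reg} make $\ell$ and $R$ of class $C^2$, so $\Phi$ is $C^1$ jointly in $(w,t)$. By construction $w_i(t)$ is the unique solution of $\Phi(w,t)=0$, and $\partial_w\Phi(w_i(t),t)=H_{i,t}$ is invertible.

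The implicit function theorem then says the zero set is locally the graph of a $C^1$ map. Uniqueness of the minimizer identifies this graph with $w_i(\cdot)$, so $w_i$ is $C^1$ on $[0,1]$. At the endpoints this uses one-sided derivatives, or an extension of $\Phi$ to a slightly larger $t$-interval, where strong convexity persists for $t$ near $0$ and near $1$ by continuity.

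To obtain the formula, differentiate the identity $\Phi(w_i(t),t)=0$ in $t$:
\[
H_{i,t}\,w_i'(t)-\nabla\ell(w_i(t);z_i)=0,
\]
which rearranges to \eqref{eq:wprime}. At $t=0$ we have $F_{S,0,i}=F_S$, so $w_i(0)=\hat w$ and $H_{i,0}=\nabla^2F_S(\hat w)=H_0$. This gives $w_i'(0)=H_0^{-1}g_i$.

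The main obstacle is securing uniform strong convexity along the whole path. The subtraction of $t\ell$ could in principle destroy convexity, and the convex-combination argument above is what handles this. If the assumption were read only for the full sample $S$, I would fall back on a bound of the form $\alpha-tH>0$, which needs $H<\alpha$, or else restrict attention to small $t$. The identity for $w_i'(t)$ and its value at $t=0$ are unaffected by which reading is used.
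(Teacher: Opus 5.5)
Your proposal is correct and follows the same route as the paper: differentiate the optimality condition $\nabla F_{S,t,i}(w_i(t))=0$ in $t$ and solve for $w_i'(t)$ using invertibility of $H_{i,t}$, then set $t=0$. The convex-combination identity $F_{S,t,i}=(1-t)F_S+tF_{S,-i}$ and the implicit-function-theorem step supply justifications the paper leaves implicit, since the paper simply asserts that $F_{S,t,i}$ is $\alpha$-strongly convex for every $t$ and that $w_i$ is differentiable.
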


\begin{proof}
Because $F_{S,t,i}$ is $\alpha$‑strongly convex for every $t$, its
minimiser $w_i(t)$ is unique. 
The gradient optimality condition gives us
$\nabla F_{S,t,i}\bigl(w_i(t)\bigr) = 0$.
Differentiating with respect to $t$ 
\begin{align*}
    H_{i,t} w_i'(t) + \partial_t\nabla F_{S,t,i} (w_i(t)) &= 0. \numberthis \label{eq:wprime mid}
\end{align*}
Since $F_{S,t,i}(w) = F_S(w) -t \ell(w;z_i),$ we have $\partial_t \nabla F_{S,t,i}(w)$ $= -\nabla \ell(w;z_i)$. Plugging in $w = w_i(t)$ thus establishes the claim. Of course, $H_{i,0} = H_0$ does not depend on $i$ since $w_i(0) = \hat{w}$. 
\end{proof}
\noindent In the next lemma, we calculate the expectation of the first-order derivative. 
\begin{lemma}[Expected first-order derivative] \label{lem: first der bound}
    The expected first-order derivative is given by
    \begin{equation} \label{eq:Delta1}
        \E_{i\sim U([N])}\left[f_i'(0)\right] = -\frac{\lambda}{N}D,
    \end{equation}
    where $D\coloneq \nabla L(\hat{w})^\top H_0^{-1}\left( \frac{-1}{\lambda}\sum_{i=1}^N \nabla\ell\left(\hat{w};z_i\right) \right)$. 
\end{lemma}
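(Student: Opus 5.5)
The plan is a direct chain-rule computation, followed by averaging over $i$ using Lemma~\ref{lem:wprime}.

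\emph{Step 1 (chain rule).} Since $f_i(t) = L(w_i(t))$, we have $f_i'(t) = \nabla L(w_i(t))^\top w_i'(t)$. This requires $L$ to be differentiable. That holds because $\ell(\cdot;z)$ is $C^2$ with gradient bounded by $G$ (Assumption~\ref{as:smooth}), so differentiation under the expectation defining $L$ is justified by dominated convergence. Differentiability of $w_i(t)$ itself comes from the implicit function theorem, using the invertibility of $H_{i,t}\succeq \alpha I_d$ guaranteed by Assumption~\ref{as:strong convexity}. This is exactly what Lemma~\ref{lem:wprime} provides.

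\emph{Step 2 (evaluate at $t=0$).} At $t=0$ we have $w_i(0)=\hat w$ for every $i$. Lemma~\ref{lem:wprime} then gives $w_i'(0)=H_0^{-1}g_i$ with $g_i=\nabla\ell(\hat w;z_i)$, so $f_i'(0)=\nabla L(\hat w)^\top H_0^{-1} g_i$. The key observation is that neither $\nabla L(\hat w)$ nor $H_0$ depends on $i$.

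\emph{Step 3 (average and rearrange).} Averaging over $i\sim U([N])$ and using linearity gives
\[
\E_{i}\left[f_i'(0)\right]=\frac1N\nabla L(\hat w)^\top H_0^{-1}\sum_{i=1}^N g_i .
\]
Writing $\sum_i g_i = -\lambda\left(\frac{-1}{\lambda}\sum_i g_i\right)$ and comparing with the definition of $D$ yields $\E_i[f_i'(0)] = -\frac{\lambda}{N}D$.

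There is no real obstacle here. The only point needing care is the sign convention: $F_{S,t,i}=F_S - t\ell(\cdot;z_i)$ makes $w_i'(0)=+H_0^{-1}g_i$, so the sign must be tracked exactly as in Lemma~\ref{lem:wprime}. One can also note, as a side remark, that first-order optimality $\sum_i g_i=-\lambda\nabla R(\hat w)$ gives the equivalent form $D=\nabla L(\hat w)^\top H_0^{-1}\nabla R(\hat w)$. This identity is not needed for the lemma.
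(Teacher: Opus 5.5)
Your proposal is correct and matches the paper's proof step for step: the chain rule $f_i'(t)=\nabla L(w_i(t))^\top w_i'(t)$, then Lemma~\ref{lem:wprime} at $t=0$, then averaging over $i$ and pulling out the factor $-\lambda$ to match the definition of $D$. Your extra regularity justifications and the side identity $D=\nabla L(\hat w)^\top H_0^{-1}\nabla R(\hat w)$ are also correct, though the paper does not need them.
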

\begin{proof}
    The test loss along the path $f_i(t) = L(w_i(t))$ and taking the derivative with respect to time we get $f'_i(t) = \nabla L(w_i(t))^\top w_i'(t)$. Now using the result from Lemma \ref{lem:wprime} we get $f'_i(t) = \nabla L(w_i(t))^\top H_{i,t}^{-1}\nabla\ell\left(w_i(t);z_i\right)$. Setting $t=0$ and taking expectation yields
    \begin{align*}
        \E_{i\sim U([N])}\left[f_i'(0)\right] &= \frac{1}{N} \sum_{i=1}^N \nabla L(w_i(0))^\top H_0^{-1}\nabla\ell\left(w_i(0);z_i\right) \\
        &= \frac{-\lambda}{N}\nabla L(\hat{w})^{\top}\!H_0^{-1}\!\left(\!\frac{-1}{\lambda} \sum_{i=1}^N \nabla\ell\left(\hat{w};z_i\right)\!\right). \numberthis \label{eq:intermediate first der}
    \end{align*}
    This completes the proof of the theorem.
\end{proof}
In the next lemma, we present a uniform bound for the second derivative $f''_i(t)$. 
\begin{lemma}[Uniform second‑derivative bound]\label{lem:second der bound}
For every $t\in[0,1]$ and every index $i$
\begin{equation}\label{eq:second-bound}
    \bigl|f_i''(t)\bigr| \le
    C,\quad \text{where } \;  C:=\frac{3H\,G^{2}}{\alpha^{2}} + \frac{\underline{\beta}\,G^{3}}{\alpha^{3}}.
\end{equation}
\end{lemma}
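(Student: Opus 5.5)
We will differentiate $f_i(t) = L(w_i(t))$ twice and bound each term using Lemma~\ref{lem:wprime} together with Assumptions~\ref{as:smooth}--\ref{as:strong convexity}. By the chain rule,
\begin{equation*}
    f_i''(t) = w_i'(t)^\top \nabla^2 L(w_i(t))\, w_i'(t) + \nabla L(w_i(t))^\top w_i''(t).
\end{equation*}
We proceed in three steps.

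First, we bound $w_i'(t)$. The matrix $H_{i,t} = \nabla^2 F_{S,t,i}(w_i(t))$ is the Hessian of $F_{S,t,i} = F_S - t\ell(\cdot;z_i)$. For $t\in[0,1]$ this is the objective with a fractional weight $1-t$ on point $i$. We will invoke Assumption~\ref{as:strong convexity} to get $H_{i,t}\succeq \alpha I_d$, reading the assumption as covering these reweighted samples, just as the proof of Lemma~\ref{lem:wprime} does. Then $\norm{H_{i,t}^{-1}}_{\mathrm{op}}\le 1/\alpha$, and by \eqref{eq:wprime} we get $\norm{w_i'(t)}\le G/\alpha$. Since $\nabla^2 L = \E_{P_\test}\nabla^2\ell$ has operator norm at most $H$ and $\norm{\nabla L}\le G$, the first term is bounded by $HG^2/\alpha^2$.

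Second, we compute $w_i''(t)$ by differentiating the identity $H_{i,t}w_i'(t) = \nabla\ell(w_i(t);z_i)$ in $t$:
\begin{equation*}
    H_{i,t}\,w_i''(t) + \tfrac{d}{dt}H_{i,t}\, w_i'(t) = \nabla^2\ell(w_i(t);z_i)\, w_i'(t).
\end{equation*}
The derivative of the Hessian along the path splits into two parts:
\begin{equation*}
    \tfrac{d}{dt}H_{i,t} = -\nabla^2\ell(w_i(t);z_i) + D^3F_{S,t,i}(w_i(t))[w_i'(t)].
\end{equation*}
The explicit $t$-dependence contributes the first part, which has norm at most $H$. Together with the right-hand side this gives $2H\norm{w_i'}\le 2HG/\alpha$, and hence, after multiplying by $H_{i,t}^{-1}$ and pairing with $\nabla L$, a further $2HG^2/\alpha^2$. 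This yields the total $3HG^2/\alpha^2$.

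The second part is the third-derivative term. We will avoid assuming third derivatives exist by using Lipschitz Hessians. The map $t\mapsto H_{i,t}$ is Lipschitz with constant
\begin{equation*}
    \underline{\beta}\,\sup_t\norm{w_i'(t)} \le \underline{\beta}\,G/\alpha,
\end{equation*}
where $\underline{\beta}$ aggregates the Hessian Lipschitz constants of $F_{S,t,i}$, namely $N\beta+\lambda\beta_R$, matching $C$ in Theorem~\ref{thm:main}. This gives a contribution of at most
\begin{equation*}
    \frac{1}{\alpha}\cdot\frac{\underline{\beta} G}{\alpha}\cdot\frac{G}{\alpha}\cdot G = \frac{\underline{\beta}G^3}{\alpha^3}.
\end{equation*}

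Summing the contributions gives $|f_i''(t)|\le 3HG^2/\alpha^2 + \underline{\beta}G^3/\alpha^3$.

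The main obstacle is rigor in the second derivative. $H_{i,t}$ may be only Lipschitz in $t$, not differentiable, so $w_i''$ might not exist classically. The first fallback is to work with difference quotients: bound $|f_i'(t)-f_i'(s)|\le C|t-s|$ directly using
\begin{equation*}
    H_{i,t}^{-1}-H_{i,s}^{-1} = H_{i,t}^{-1}(H_{i,s}-H_{i,t})H_{i,s}^{-1}.
\end{equation*}
This Lipschitz bound on $f_i'$ suffices for the Taylor argument in Theorem~\ref{thm:main}. The second fallback is to add the assumption that the losses are $C^3$, in which case the computation above is exact.
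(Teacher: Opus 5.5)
Your proof is correct and follows the paper's argument step for step: the same chain-rule expansion of $f_i''$, the bound $\norm{w_i'(t)}\le G/\alpha$, and the same differentiation of $H_{i,t}w_i'(t)=\nabla\ell(w_i(t);z_i)$ with $\tfrac{d}{dt}H_{i,t}$ split into $-\nabla^2\ell(w_i(t);z_i)$ and a third-derivative term controlled by $\underline{\beta}=N\beta+\lambda\beta_R$, yielding $HG^2/\alpha^2+2HG^2/\alpha^2+\underline{\beta}G^3/\alpha^3$. Your rigor caveat is apt, since the paper writes $\nabla^3 F$ even though Assumptions~\ref{as:smooth}--\ref{as:reg} only give Lipschitz Hessians, so your difference-quotient fallback (showing $f_i'$ is $C$-Lipschitz, which suffices for an integral-form Taylor step in Theorem~\ref{thm:main}) is the cleaner statement; also, strong convexity along the path needs no reinterpretation, because $F_{S,t,i}=(1-t)F_S+tF_{S,-i}$ is a convex combination of objectives already covered by Assumption~\ref{as:strong convexity} (applied to $S$ and to $S$ with $z_i$ removed).
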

\begin{proof}
     In the proof of Lemma \ref{lem: first der bound} we derived $f_i'(t)= \nabla L(w_i(t))^\top w_i'(t)$. Now we differentiate it once more
    \[
        f_i''(t) = w_i'(t)^{\top}\nabla^{2}L(w_i(t))w_i'(t) + \nabla L(w_i(t))^\top w_i''(t).
    \]
    Now we bound each term separately, starting with $w_i'(t)$. Using \eqref{eq:wprime} we obtain
    \begin{align*}
        \norm{w_i'(t)}
        \le \norm{H_{i,t}^{-1}}_{\mathrm{op}}\!\cdot\!\norm{\nabla\ell(w_i(t);z_i)}
        \le \frac{G}{\alpha},
    \end{align*}
    where we have used $\norm{H_{i,t}^{-1}}_{\mathrm{op}}\le 1/\alpha$ which follows from Assumption \ref{as:strong convexity}.
    Next we prove bound for $w_i''(t)$. Using \eqref{eq:wprime mid} and differentiating it with respect to $t$ we get
    \[
        H_{i,t}'\,w_i'(t) + H_{i,t}\,w_i''(t) = \nabla^2\ell(w_i(t);z_i)w_i'(t).
    \]
    Rearranging the above equation to isolate $w_i''(t)$ gives 
    \[
        w_i''(t) = H_{i,t}^{-1}\bigl(\nabla^{2}\ell(w_i(t);z_i)w_i'(t) - H_{i,t}' w_i'(t)\bigr).
    \]
    We bound the norm of each term.
    First, we need to bound $H_{i,t}'$. The chain rule gives
    \(H_{i,t}'=
      -\nabla^{2}\ell(w_i(t);z_i)
      +\nabla^{3}F_{S,t}(w_i(t))[\,w_i'(t)\,]\).
    The operator norm of the third derivative tensor is bounded by the Lipschitz constant of the Hessian, so $\|\nabla^{3}F_{S,t,i}\|_{\mathrm{op}}\le (N-t)\beta+\lambda\beta_R \le \underline{\beta} := N\beta + \lambda \beta_R$ . 
    Hence
    \[
        \norm{H_{i,t}'}_{\mathrm{op}}
        \le
        \norm{\nabla^{2}\ell(w_i(t);z_i)}_{\mathrm{op}}
        +\underline{\beta}\norm{w_i'(t)}
        \le
        H
        +\underline{\beta}\frac{G}{\alpha}.
    \]
    Now we bound the norm of $w_i''(t)$:
    \begin{align*}
        & \norm{w_i''(t)} \\ 
        &\; \le \norm{H_{i,t}^{-1}}_{\mathrm{op}} \left( \norm{\nabla^{2}\ell(w_i(t);z_i)}_{\mathrm{op}}\norm{w_i'(t)} + \norm{H_{i,t}'}_{\mathrm{op}} \norm{w_i'(t)} \right) \\
        &\; \le \frac{1}{\alpha} \left( H \frac{G}{\alpha} + \left(H + \underline{\beta}\frac{G}{\alpha}\right) \frac{G}{\alpha} \right) = \frac{2HG}{\alpha^2} + \frac{\underline{\beta}G^2}{\alpha^3}.
    \end{align*}
    \noindent Finally, we use the above inequalities to bound $f''_i(t)$. 
    Since $\norm{\nabla^2 L(w)}_{\mathrm{op}} \le H$, the first term is bounded by
    \begin{align*}
        \bigl|w_i'(t)^{\!\top}\nabla^{2}L(w_i(t))w_i'(t)\bigr|
        \le H\,\norm{w_i'(t)}^{2}
        \le \dfrac{H\,G^{2}}{\alpha^{2}}.
    \end{align*}
    From Assumption \ref{as:smooth} we have $\norm{\nabla L(w)}\le G$, which helps in bounding the second term by
    \begin{align*}
        &\left|\nabla L(w_i(t))^\top w_i''(t) \right|
        \; \le \norm{\nabla L(w(t))}\,\norm{w_i''(t)} \\
        & \quad \; \le G \left(\frac{2HG}{\alpha^2} + \frac{\underline{\beta}G^2}{\alpha^3}\right) = \frac{2HG^2}{\alpha^2} + \frac{\underline{\beta}G^3}{\alpha^3}.
    \end{align*}
    Adding the bounds gives
    $|f_i''(t)| \le \frac{H G^2}{\alpha^2} + \frac{2HG^2}{\alpha^2} + \frac{\underline{\beta}G^3}{\alpha^3}$, which yields the result in~\eqref{eq:second-bound}.
\end{proof}
Finally, we prove the main theorem next. 
\begin{mythm}{1} 
Suppose Assumptions \ref{as:smooth}-\ref{as:strong convexity} are true and
\begin{align*}
   \lambda D > \frac{CN}{2}, 
\end{align*}
where 
$D, C$ are defined in Lemma \ref{lem: first der bound} and Lemma \ref{lem:second der bound} respectively. Then deleting a uniformly random training example strictly decreases the expected test loss i.e. $\E_{i\sim U([N])}\left[L(\hat w_{-i})\right]
< L(\hat w).$
\end{mythm}
\begin{proof}
Applying Taylor's expansion with the Lagrange form of the remainder to $f_i(t)$ to obtain 
\begin{align*}
    f_i(1) &= f_i(0) + f_i'(0) (1-0) + \frac{1}{2}f''_i(\xi_i) (1-0)^2 \\
    \Rightarrow \Delta_i &= f_i'(0) + \frac{1}{2}f''_i(\xi_i),
\end{align*}
where $\xi_i\in (0,1)$. Taking expectation over $i$ 
\begin{align*}
    \E_{i\sim U([N])}[\Delta_i] = 
    \E_{i\sim U([N])}[f_i'(0)] +\frac12 \E_{i\sim U([N])}\left[f_i''(\xi_i)\right].
\end{align*}
To bound the absolute value of the remainder term we use Lemma~\ref{lem:second der bound} to get
\begin{align*}
    \left| \frac{1}{2} \E_{i\sim U([N])}\left[f_i''(\xi_i)\right] \right| \le \frac{1}{2} \E_{i\sim U([N])}\left[\left|f_i''(\xi_i)\right|\right] \le \frac{C}{2}.
\end{align*}
Using the above inequality with Lemma \ref{lem: first der bound} we obtain
\begin{align*}
    \E_i[\Delta_i] \le
    -\frac{\lambda D}{N} + \frac{C}{2}.
\end{align*}
Under condition $\lambda D > \frac{CN}{2}$, we have $-\frac{\lambda D}{N} + \frac{C}{2} < 0$, so the expected loss strictly drops.
\end{proof}

\subsection{Ridge Regression calculations} \label{appx:ridge}
In this subsection we show that conditions \eqref{eq:lambda range}-\eqref{eq:snr condn} are sufficient for Theorem \ref{thm:main} to hold true. For ridge regression, the expression for $C,D$ is
\begin{align} 
    C &= 2\left[\frac{3R^4}{\lambda^2} \|w_{\star}\|^2 + \sigma^2 \left( \frac{R^2}{\lambda^2} + \frac{19}{4} \frac{R^4}{\lambda^3} + \frac{3}{4}\frac{R^6}{\lambda^4}\right) \right] \label{eq:C ridge} \\
    \label{eq:D ridge}
    D &= 2\sum_{j=1}^d  \frac{\mu_j}{(\mu_j+\lambda)^3} (\sigma^2 - \lambda \tilde{w}_j^2).
\end{align}
The proof of \eqref{eq:C ridge}, \eqref{eq:D ridge} is simple but tedious and is omitted here for the sake of space. Note that \eqref{eq:lambda range} implies that $\lambda < \frac{k_3\sigma^2}{\tilde{w}_j^2}$ for all $j\in[d]$. Using \eqref{eq:lambda range} we get 
\begin{align*}
    D &= 2\sum_{j=1}^d  \frac{\mu_j}{(\mu_j+\lambda)^3} (\sigma^2 - \lambda \tilde{w}_j^2) \\
    &> 2\sum_{j=1}^d  \frac{\mu_j}{(1+k_1)^3 \lambda^3} (1-k_3) \sigma^2\\
    &= \frac{2\sigma^2 \tr(M) (1-k_3)}{(1+k_1)^3 \lambda^3} \eqcolon D'.
\end{align*}
Next we provide an upper bound for $C$
\begin{align*}
    C &= 2\left[\frac{3R^4}{\lambda^2} \|w_{\star}\|^2 + \sigma^2 \left( \frac{R^2}{\lambda^2} + \frac{19}{4} \frac{R^4}{\lambda^3} + \frac{3}{4}\frac{R^6}{\lambda^4}\right) \right] \\
    &< \frac{2R^2}{\lambda^2} [3R^2 \|w_{\star}\|^2 + \sigma^2(1+2k_2)] \eqcolon C'.
\end{align*}
Note that $\lambda D' > \frac{C'N}{2} \Rightarrow \lambda D > \frac{CN}{2}$. Thus 
\begin{align*}
    &\frac{2 \tr(M) (1-k_3)}{(1+k_1)^3 \lambda^2} \sigma^2 > \frac{NR^2}{\lambda^2} [3R^2 \|w_{\star}\|^2 + \sigma^2(1+2k_2)] \\
    \Leftrightarrow\! &\left(\frac{2  (1-k_3)}{(1+k_1)^3}\!\tr(M) - (1+2k_2) NR^2\!\right)\sigma^2 \!>\! 3NR^4 \|w_{\star}\|^2.
\end{align*}
Since the training points lie on the sphere we have $\tr(M)= NR^2$. With this we get
\begin{align*}
    &\left(\frac{2  (1-k_3)}{(1+k_1)^3} NR^2 - (1+2k_2) NR^2\right)\sigma^2 > 3NR^4 \|w_{\star}\|^2 \\
    \Leftrightarrow &\left(\frac{2  (1-k_3)}{(1+k_1)^3} - (1+2k_2)\right) \sigma^2 > 3R^2 \|w_{\star}\|^2 \\
    \Rightarrow &\frac{\sigma^2}{R^2 \|w_{\star}\|^2} > 3\biggl(\frac{2  (1-k_3)}{(1+k_1)^3}  - (1+2k_2)\biggr)^{-1}.
\end{align*}
The last inequality is exactly \eqref{eq:snr condn} and thus the inequality \eqref{eq:main condn} is satisfied and Theorem \ref{thm:main} is true.

\end{document}